\newtheorem{definition}{Definition}
\newcommand{\name}{{\textsc{GPatcher}}}
\newif\ifshowcomment
\newcommand{\zhou}[1]{\ifshowcomment 
{\textsf{\textcolor{red}{\textsuperscript{\textit{Zhou}}[#1]}}} 
\else \fi}
\NewDocumentCommand{\shuaicheng}{ mO{} }{\textcolor{blue}{\textsuperscript{\textit{Shuaicheng}}\textsf{\textbf{\small[#1]}}}}
\newcommand{\hidesout}[1]{}
\newcommand{\haohui}[1]{{\small\color{green}{\bf haohui: #1}}}
\newcommand{\si}[1]{{\small\color{brown}{\bf si: #1}}}
\newcommand{\zhu}[1]{{\small\color{magenta}{\bf zhu: #1}}}
\newcommand{\mkclean}{
  \renewcommand{\zhou}[1]{}
  \renewcommand{\zhu}[1]{}
  \renewcommand{\si}[1]{}
  \renewcommand{\shuaicheng}[1]{}
  \renewcommand{\haohui}[1]{}
}
\algnewcommand{\LeftComment}[1]{\Statex \(\triangleright\) #1}
\title{\name: A Simple and Adaptive MLP Model for Alleviating Graph Heterophily}
\author{%
  Shuaicheng Zhang*\\
  Department of Computer Science\\
  Virginia Tech\\
  Blacksburg, VA, 24060 \\
  \texttt{zshuai8@vt.edu} \\
  \And
  Haohui Wang* \\
  Department of Computer Science\\
  Virginia Tech\\
  Blacksburg, VA, 24060 \\
  \texttt{haohuiw@vt.edu} \\
  \AND
  Si Zhang \\
  Meta\\
  Menlo Park, CA, 94025 \\
  \texttt{sizhang@meta.com} \\
  \And
  Dawei Zhou \\
  Department of Computer Science\\
  Virginia Tech\\
  Blacksburg, VA, 24060 \\
  \texttt{zhoud@vt.edu} \\
}
\begin{document}
\maketitle
\footnote{*Equal Contributions}

\begin{abstract}
While graph heterophily has been extensively studied in recent years, a fundamental research question largely remains nascent: \emph{How and to what extent will graph heterophily affect the prediction performance of graph neural networks (GNNs)}? 
In this paper, we aim to demystify the impact of graph heterophily on GNN spectral filters. 
Our theoretical results show that it is essential to design adaptive polynomial filters that adapt different degrees of graph heterophily to guarantee the generalization performance of GNNs. 
Inspired by our theoretical findings, we propose a simple yet powerful GNN named $\name$ by leveraging the MLP-Mixer architectures. 
Our approach comprises two main components: 
(1) an adaptive \emph{patch extractor function} that automatically transforms each node's non-Euclidean graph representations to Euclidean patch representations given different degrees of heterophily, and  
(2) an efficient \emph{patch mixer function} that learns salient node representation from both the local context information and the global positional information. 
Through extensive experiments, the $\name$ model demonstrates outstanding performance on node classification compared with popular homophily GNNs and state-of-the-art heterophily GNNs.
\end{abstract}

\section{Introduction}
Graph Neural Networks (GNNs) have emerged as powerful tools for learning graph-structured data in various domains, such as biology~\cite{huang2020skipgnn}, chemisty~\cite{zhao2021identifying, jiang2021could}, finance~\cite{wang2019semi}, natural language processing~\cite{wu2023graph}, and computer vision~\cite{zhao2019semantic, chen2019multi}. 
From the perspective of graph signal processing, GNNs can be naturally interpreted as a graph filter $g(\mathbf{\Lambda})$ defined in the spectral domain. Formally speaking, given a graph $\mathcal{G} = (\mathcal{V}, \mathcal{E}, \mathbf{X})$ and a set of training node labels $\mathcal{Y}_l$, GNNs are typically designed to learn a spectral filter $g(\mathbf{\Lambda})$ (also referred as hypothesis function) that removes noise from input graph and predict the unknown ``true signal'' $\mathcal{Y}_u$ (i.e., labels of the unlabeled nodes)~\cite{NT2019RevisitingGN}. 
It has been theoretically demonstrated that the majority of GNNs are essentially a low-pass filter~\cite{NT2019RevisitingGN}, which assumes the node labels $\mathcal{Y}$ exhibit a low-frequency nature (i.e., nearby nodes tend to share similar attributes with each other). 
However, it is often the case that real-world graphs are heterophily~\cite{zhu2021graph, zheng2022graph}, where nodes with similar features or common connections have disparate classes. Moreover, growing empirical evidence~\cite{zhu2021graph,zheng2022graph,DBLP:conf/nips/LuanHLZZZCP22} suggests that graph heterophily largely deteriorates conventional GNNs' performance, even worse than vanilla Multilayer Perceptron (MLP)~\cite{zhang2022graphless}.

To alleviate the negative impact of graph heterophily on GNNs, existing works focus on the techniques such as non-local neighbor extension~\cite{zhu2020beyond,abu2019mixhop} and GNN refinement~\cite{liu2023beyond,he2022block}. Despite the promising performance, there are limited theoretical tools to provide a deep understanding of the impact of heterophily on GNNs. Therefore, in this paper, we begin by posing the research question \textbf{Q1: "\textit{How and to what extent will graph heterophily affect the prediction performance of GNNs?}"}. In response, we propose a theoretical model to demystify the complex relationship between the degree of graph heterophily and graph filters $g(\mathbf{\Lambda})$. Perhaps more interestingly, our result implies that the average of frequency responses in $g(\mathbf{\Lambda})$ and the graph heterophily degree may not follow an intuitive linear correlation, which deviates from the common assumption that higher-pass graph filters are preferred for graphs with higher heterophily degrees. As an illustrative example, in Figure~\ref{fig:lambdah}, we present the obtained graph filters $g(\mathbf{\Lambda})$ on various benchmark graphs in a two-dimensional space, where the horizontal axis corresponds to the heterophily degree of input graphs while the vertical axis corresponds to the average of frequency responses in $g(\mathbf{\Lambda})$. 
\begin{wrapfigure}{H}{0.5\textwidth}
    \centering
    \includegraphics[width=\linewidth]{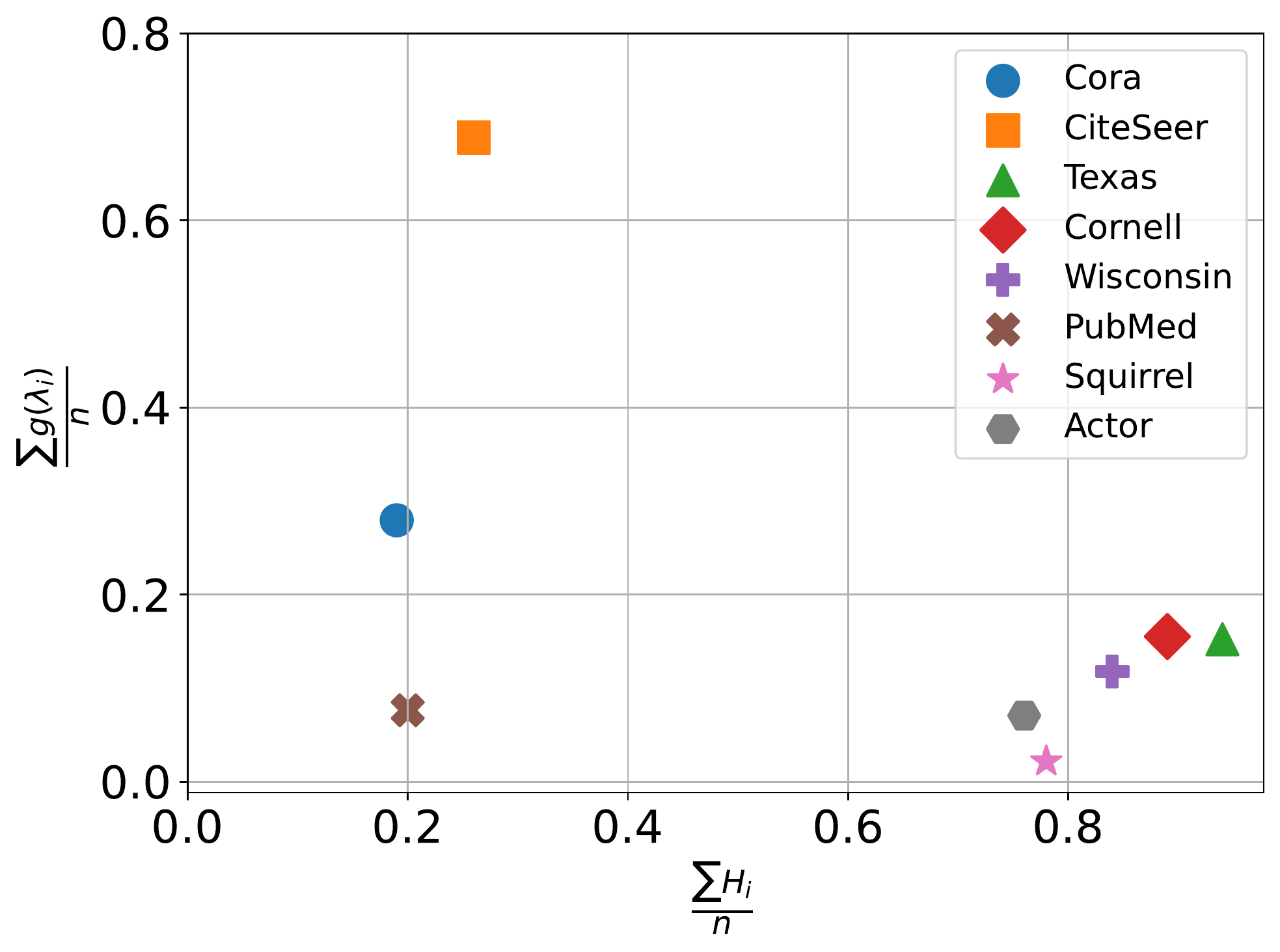}
    \caption{An illustrative example of the average of obtained frequency responses graph filters $\frac{\sum{g(\lambda_i)}}{n}$ vs. heterophily degree $\frac{\sum{H_i}}{n}$ on various benchmark graphs.}
    \label{fig:lambdah}
\end{wrapfigure}

Moreover, our theoretical results motivate us to ask the second research question \textbf{Q2: "\textit{Can we leverage adaptive yet simpler GNN models to automatically characterize the graph heterophily degree and learn salient node representations?}"}. 
Recent studies have illuminated the potentials of MLP-Mixer models in delivering superior performance in computer vision~\cite{liu2023survey} and natural language processing~\cite{lin2022survey}, often comparable to or even exceeding that of the transformer-based and autoregressive models, yet with a simple structure.
This potential leads us to propose a novel MLP-Mixer-based GNN named $\name$, which is designed to learn salient node representations by adapting to the heterophily degree of input graphs. 
While patches in the original MLP-Mixer for images are well-defined, they are not directly applicable to graph data. This is because traditional adjacency matrices do not readily support the conversion from non-Euclidean graph data into a format compatible with Euclidean processing capabilities. To address this issue, we propose a \emph{patch extractor function} that transforms non-Euclidean graph data into Euclidean representations, making them amenable to processing by the $\name$. 
Inspired by our theoretical results, this heterophily-aware patch extractor is also designed to serve as an \emph{adaptive polynomial filters}, capturing individual node importance and incorporating adaptivity based on the degree of heterophily. 
In addition, we introduce an efficient \emph{patch mixer function} that captures both the local context information~\cite{DBLP:conf/www/TangQWZYM15,kipf2017semi} and the global positional information~\cite{you2019position}, which are essential to address graph heterophily across varying degrees.

The key contributions of this paper are threefold.
(1) For the first time, we provide theoretical analyses, which demystify the relationship between graph filter $g(\mathbf{\Lambda})$ and the graph heterophily degree $H$.
(2) Inspired by our theoretical results, we introduce the $\name$, a novel adaptation of the MLP-Mixer model to graphs, maintaining simplicity while adaptively capturing the global positional information and label information based on the degree of heterophily. We further extend it to a more efficient variation named Fast-\name, which embraces more simplicity. 
(3) We substantiate the generalization performance of our $\name$, through rigorous theoretical analysis in Theorem~\ref{theorem:bound} and extensive experiments on both synthetic and real-world networks, emphasizing the potential of simpler yet highly effective architectures for analyzing graph-structured data. 

\section{Preliminaries}
\subsection{Notations}
We use regular letters to denote scalars (e.g., $c$), boldface lowercase letters to denote vectors (e.g., $\mathbf{r}$), and boldface uppercase letters to denote matrices (e.g., $\mathbf{X}$).
We represent a graph as $\mathcal{G} = (\mathcal{V}, \mathcal{E}, \mathbf{X})$, where $\mathcal{V}$ represent the set of nodes, $\mathcal{E} \subseteq \mathcal{V} \times \mathcal{V}$ represent the set of edges, and $\mathbf{X}$ represent the node feature matrix. We define $n$ as the number of nodes, $\mathcal{Y}$ as the set of labels. $\mathbf{L}$ ($\mathbf{A}$) is the Laplacian (adjacency) matrix, and $\tilde{\mathbf{L}}$ ($\tilde{\mathbf{A}}$) is the normalized Laplacian (adjacency) matrix. Table~\ref{tab:notation} in Appendix~\ref{sec:notation} summarizes the main symbols and notations in this paper.

\subsection{Graph Spectral Filters}
Spectral-based GNNs provide an effective way of spectral analysis on graphs according to graph signal processing. Specifically, the eigendecomposition of symmetric normalized Laplacian is $\tilde{\mathbf{L}} = \mathbf{U}\mathbf{\Lambda} \mathbf{U}^T$, where $\mathbf{U}\in\mathbb{R}^{n\times n}$ is the graph Fourier basis formed by $n$ eigenvectors, and $\mathbf{\Lambda}$ denotes the diagonal eigenvalue matrix. Then the graph Fourier transform of the input signal of all nodes $\textbf{x}\in\mathbb{R}^n$ is defined as $\hat{\textbf{x}}=\mathbf{U}^T \textbf{x}$, where $\hat{\textbf{x}}$ is the Fourier transformed graph signal. The inverse graph Fourier transform is given as $\textbf{x}=\mathbf{U}\hat{\textbf{x}}$. Hence, the spectral graph convolution of the signal $\textbf{x}$ is defined as $\textbf{x}*_{\mathcal{G}} g = \mathbf{U}g(\mathbf{\Lambda}) \mathbf{U}^T \textbf{x}$, where $g$ is the graph filter. 
Vanilla graph convolutional network (GCN)~\cite{kipf2017semi} can be considered as the spectral convolution with a simplified first-order Chebyshev polynomial to approximate the spectral filter $g(\mathbf{\Lambda})=(I+\mathbf{\Lambda})^{-1}$, which attenuates the high-frequency more than the low-frequency thus is a low-pass filter~\cite{NT2019RevisitingGN,wu2019simplifying}.

\subsection{Graph Heterophily}
Many of the existing works on GNN models share a vital assumption of graph homophily, i.e., locally connected nodes should be similar in terms of their features and labels~\cite{kipf2017semi, wu2019simplifying}. However, recent research shows interest in the limitations of graph models in dealing with heterophily graphs, where connections often occur between nodes that are not similar. A commonly used metric for modeling the heterophily of a graph is node heterophily~\cite{zheng2022graph}.
\begin{definition}\label{def:nodeH}
The neighbor set of $i$-th node $v_i$ is denoted as $\mathcal{N}(v_i)=\{v_j: (v_i, v_j)\in\mathcal{E}\}$, $y_i$ is the label of node $v_i$, then we define the node heterophily $H_i$ as the proportion of neighbors with a different label of node $v_i$:
\begin{equation}
    H_i = \frac{|\{v_j\in\mathcal{N}(v_i): y_j \neq y_i\}|}{|\mathcal{N}(v_i)|}.
\end{equation}
$H=(H_0,\ldots,H_{n-1})$ defines vector of all node heterophilies in graph $\mathcal{G}$.
\end{definition}
Based on the above definition of node-level heterophily, the graph-level heterophily can be derived by averaging across nodes, i.e., $\frac{1}{n}\sum_{0\leq i\leq n-1}H_i$.  
Real-world networks, such as those represented by the Texas, Squirrel, Chameleon, Cornell, Wisconsin, and Actor datasets~\cite{rozemberczki2021multi,10.1145/1557019.1557108,garcia2016using}, often exhibit a high heterophily. The standard approach of graph convolutional networks, which updates node features by aggregating neighboring node features, is thus less effective in these cases. 
Therefore, it is crucial to gain a mathematical understanding of the research question (\textbf{Q1}).
\section{Demystify the Impact of Graph Heterophily on Graph filters}

Existing works \cite{zhu2020beyond,liu2023beyond,li2021beyond} share a common consensus that high-pass graph filters are preferable for heterophily graphs, while low-pass filters are favored for homophily graphs. However, this consensus is weak and not necessarily accurate. The study of $g(\mathbf{\Lambda})$ focuses on low-pass/ high-pass filters in the spectral domain, while heterophily indicates the differences among neighbor labels, which, although highly correlated, are not necessarily equivalent. For example, consider a graph where each node is assigned a different label, resulting in a highly heterophilic graph. In this case, a  high-pass filter may be appropriate for dealing with the graph heterophily. However, in the graph coloring problem where each node is assigned a color such that connected nodes have different colors while minimizing the total number of colors used, the graph exhibits extreme heterophily, with all neighbor nodes having different labels (colors). Nevertheless, since the number of labels (colors) may be limited, only a low-pass filter may be required to capture the necessary information~\cite{li2022rethinking,wang2023graph}. 

A key challenge to building the connection between graph filter and graph heterophily is that they are originally defined in different domains: graph filter is given in the spectral domain, while graph heterophily is in the spatial domain. Therefore, we propose a novel notion of graph heterophily that is defined in the spectral domain via graph Fourier Transformation.
\begin{definition}
Denote $\mathbf{U}$ as the eigenvector matrix of normalized adjacency matrix $\tilde{\mathbf{A}}$, the heterophily in spectral domain $\hat{H}$ is defined as:
\begin{equation}
    \hat{H}  = H \cdot \mathbf{U}
\end{equation}
where $\hat{H} = (\hat{H}_0,\ldots,\hat{H}_{n-1})$ is denoted as the frequency components of $H$. 
\end{definition}

Based on Definition 2, we provide the following lemma to show the lower bound of graph filters concerning graph heterophily. The proof of Lemma~\ref{lemma:filter} is provided in Appendix~\ref{sec:proof}.
\begin{restatable}{lemma}{lemmafilter}
\label{lemma:filter}
$\tilde{\mathbf{A}}$ is the normalized adjacency matrix, $n$ is the number of nodes, $\lambda_i$ is the $i$-th eigenvalue of $\tilde{\mathbf{A}}$, where $-1=\lambda_0\leq\cdots\leq\lambda_{n-1}\leq 1$, and $\mathbf{\Lambda}$ is a diagonal matrix of all eigenvalues. $g(\mathbf{\Lambda})$ is the graph filter, then we have the lower bound of the average of $g(\mathbf{\Lambda})$:
\begin{equation}
    \sum_{0\leq i\leq n-1} \frac{g\left(\lambda_i\right)}{n} \geq \frac{\sum_{0\leq i\leq n-1} \log |\hat{H}_i|}{n(\log \sum_{0\leq i\leq n-1} g\left(\lambda_i\right) |\hat{H}_i|-\log \sum_{0\leq i\leq n-1} g\left(\lambda_i\right))}
\end{equation}
\end{restatable}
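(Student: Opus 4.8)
The plan is to clear the common factor of $n$ from both sides and reduce the statement to a single scalar inequality. Writing $S=\sum_i g(\lambda_i)$ and $T=\sum_i g(\lambda_i)\,|\hat{H}_i|$, the claimed bound is equivalent to
\[
S \;\geq\; \frac{\sum_i \log|\hat{H}_i|}{\log(T/S)},
\]
so it suffices to control the numerator $\sum_i \log|\hat{H}_i|$ and the log-ratio $\log(T/S)$ separately and then recombine them with attention to signs.

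First I would bound the numerator from above using a Parseval-type identity. Because $\tilde{\mathbf{A}}$ is symmetric, its eigenvector matrix $\mathbf{U}$ is orthonormal, so the spectral heterophily $\hat{H}=H\cdot\mathbf{U}$ of Definition 2 satisfies $\sum_i \hat{H}_i^2 = \|\hat{H}\|^2 = \|H\|^2 = \sum_i H_i^2$. Since each node heterophily obeys $H_i\in[0,1]$ by Definition~\ref{def:nodeH}, this gives $\sum_i \hat{H}_i^2 \leq n$, and the AM--GM inequality then yields $\prod_i \hat{H}_i^2 \leq \big(\tfrac1n\sum_i \hat{H}_i^2\big)^{n} \leq 1$. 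Taking logarithms shows the numerator is non-positive: $\sum_i \log|\hat{H}_i| \leq 0$.

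Next I would handle the denominator with a concavity argument. Treating $w_i = g(\lambda_i)/S$ as a probability vector (valid because the frequency responses satisfy $g(\lambda_i)\geq 0$), Jensen's inequality for the concave function $\log$ — equivalently the weighted AM--GM inequality — gives $\log(T/S)=\log\big(\sum_i w_i |\hat{H}_i|\big) \geq \sum_i w_i \log|\hat{H}_i|$, i.e.\ $S\log(T/S) \geq \sum_i g(\lambda_i)\log|\hat{H}_i|$. Combining this with the low-pass normalization $g(\lambda_i)\in[0,1]$ and the per-term sign information from the previous step then lets one pass from $\sum_i g(\lambda_i)\log|\hat{H}_i|$ up to $\sum_i \log|\hat{H}_i|$, which closes the inequality in the regime $T\geq S$.

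I expect the sign bookkeeping to be the main obstacle rather than either inequality itself. Since the numerator is non-positive, whenever $\log(T/S)>0$ the right-hand side is non-positive and the bound follows immediately from $S\geq 0$; the delicate case is $T<S$, where the log-ratio changes sign and the direction supplied by Jensen's inequality is no longer the one needed. I would therefore isolate and prove the clean $T\geq S$ case first, and then treat $T<S$ by leaning on the filter's $[0,1]$ normalization and the Parseval bound $\sum_i\hat{H}_i^2\leq n$ to force $S\log(T/S)$ below $\sum_i\log|\hat{H}_i|$.
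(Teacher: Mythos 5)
Your first two ingredients are sound, and they closely track the paper's own proof: the Jensen step with weights $w_i = g(\lambda_i)/S$ is exactly the paper's central weighted AM--GM inequality, and your Parseval argument --- $\sum_i \hat H_i^2 = \sum_i H_i^2 \le n$ since $\mathbf{U}$ is orthogonal and $H_i\in[0,1]$, hence $\prod_i |\hat H_i| \le 1$ and $\sum_i \log|\hat H_i| \le 0$ --- is a correct addition that the paper does not have. With it, the regime $T \ge S$ (in your notation, $\log(T/S)\ge 0$) is indeed immediate: the right-hand side is nonpositive while $S \ge 0$. Note that the paper itself needs exactly this positivity and never checks it: its proof ends at $\log(T/S) \ge \frac{1}{S}\sum_i \log|\hat H_i|$ and declares the lemma proved, a step that amounts to dividing by $\log(T/S)$ and is valid only when that quantity is positive.

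The genuine gap is the regime $T < S$, which you correctly flag as delicate but do not close, and the plan you sketch for it cannot work. When $\log(T/S) < 0$, dividing flips the inequality: the lemma then demands the \emph{upper} bound $S\log(T/S) \le \sum_i \log|\hat H_i|$, whereas Jensen supplies only the \emph{lower} bound $S\log(T/S) \ge \sum_i g(\lambda_i)\log|\hat H_i|$, the wrong direction. Your intermediate move of passing ``from $\sum_i g(\lambda_i)\log|\hat H_i|$ up to $\sum_i \log|\hat H_i|$'' would require per-term control $|\hat H_i| \le 1$, so that coefficients in $[0,1]$ only increase each (negative) term; but Parseval gives aggregate control only --- individual components can be as large as $\sqrt n$ --- so the ``per-term sign information'' you invoke does not exist. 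In fact the hypotheses available to you (and to the paper) are consistent with outright failure in this regime: with $n=2$, $(|\hat H_0|,|\hat H_1|)=(1/2,\,1)$ and $(g(\lambda_0),g(\lambda_1))=(1,\,\epsilon)$, one has $T<S$ and $S=1+\epsilon$, yet the right-hand side equals $\log(1/2)/\log\bigl((1/2+\epsilon)/(1+\epsilon)\bigr) \approx 1 + \epsilon/\log 2 > S$ for small $\epsilon>0$. So no bookkeeping with the filter normalization and the Parseval bound can force the inequality there; this case is precisely where both your attempt and the paper's own argument stop short, and it cannot be repaired from the stated assumptions alone.
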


\textbf{Remark.} Lemma~\ref{lemma:filter} theoretically demonstrates that the average of frequency responses in $g(\mathbf{\Lambda})$ and the graph heterophily degree might not follow a linear correlation. The general observation is that heterophily graphs have a smaller average frequency response, and homophily graphs have a larger average frequency response. However, as shown in Figure~\ref{fig:lambdah}, PubMed dataset has a small heterogeneity degree, but the average of frequency responses graph filters obtained from our experiments is small unlike other datasets, illustrating that $\frac{1}{n}\sum_{0\leq i\leq n-1} g(\lambda_i)$ and $\frac{1}{n}\sum_{0\leq i\leq n-1} \hat{H}_i$ may follow a complex nonlinear relationship.

\section{Approach}

In this section, we first introduce the two main components of our model $\name$: patch extractor and  patch mixer, discussed in detail in Section 4.1. These components collaboratively resolve the challenges of graph heterophily for semi-supervised node classification. Then, we delve into the theoretical aspects of our approach, elucidating the generalization bound to ensure the robustness of our model. Furthermore, we propose a fast algorithm in Section 4.2 to improve the efficiency and scalability of our model. This comprehensive discussion underlines the unique characteristics of our approach, distinguishing it from classic GNN methodologies and those tailored for graph heterophily.

\subsection{$\name$ Framework}
Here we present the technical details of the patch extractor, which extracts patches from $\mathcal{G}$, followed by the patch mixer, which mixes information within each patch to learn better node representations.
\paragraph{Patch Extractor.}
The patch extractor acts as the pivotal module of our model, transforming the graph-structured data into a Euclidean representation. Its primary function is to identify and extract a relevant set of top-$p$ nodes tailored to a specific objective. To address the heterophily challenge \textbf{Q2}, we introduce a heterophily-aware patch extractor that effectively recognizes and adapts to varying heterophily levels. This extractor constructs a patch for each individual node, which is a subset of nodes (not necessarily the direct neighbors) that hold high relevance in the context of graph heterophily. The extractor selects nodes with the highest relevance scores and incorporates the top-$p$ scoring nodes into the corresponding node's patch. Grounded in our theoretical investigation of graph heterophily and guided by \textbf{Q2}, the patch extractor's design capitalizes on the graph's spectral properties and exhibits the necessary adaptability to accommodate diverse heterophily degrees. Consequently, the generated patches capture the core characteristics of the graph data and its heterophily degree, forming a solid basis for the subsequent stages of our model.

The heterophily-aware patch extractor utilizes the normalized adjacency matrix $\tilde{\mathbf{A}}$ instead of the graph Laplacian, as the former provides a powerful representation of local graph structure and its underlying relationships. This choice allows the spectral graph filter framework to be utilized, emphasizing local connectivity and effectively addressing graph heterophily. 
The filter operation based on an adaptive polynomial filter is defined as 
\begin{equation}
g(\mathbf{\Lambda}) = \sum^{K}_{k=1}\sigma(\textbf{w}_k\odot\mathbf{\Lambda}^k),
\end{equation}
where $\sigma$ denotes a non-linear activation function, and $\textbf{w}_k$ are learnable parameters. To extract the patch for each node, we first compute a scoring matrix $\textbf{R} \in \mathbb{R}^{n \times n}$ where 
\begin{equation}
    \textbf{R} = \mathbf{U} \left(\sum_{k=1}^K \textbf{w}_k\odot\mathbf{\Lambda}^k\right)\mathbf{U}^T.
\end{equation}
Subsequently, we utilize it as a rank-order metric to extract the top-$p$ nodes, viewed as the patch for each node. For each column $col$ in $\textbf{R}$, the indices are organized in descending order according to their corresponding relevance scores. This operation is mathematically represented as $\text{sorted}_{col} = \text{argsort}(\textbf{R}{:,col}, \text{descending})$. Following this, the indices of the top $p$ nodes with the highest relevance scores for each column $col$ are chosen, denoted as $\text{top-}p_{col} = \text{sorted}_{col}[0:p]$. In this way, the patch extractor is equivalent to
\begin{equation}
\phi(\tilde{\mathbf{A}}) = \text{top-}p_{col}\biggl(\mathbf{U} \left(\sum_{k=1}^K \textbf{w}_k\odot\mathbf{\Lambda}^k\right) \mathbf{U}^T \biggl).
\end{equation}

Using the selected node indices, we extract input node features of the patch for each node $v$, denoted by $\textbf{P}_v = { \textbf{X}[\text{top-}p(\textbf{R}_v)] }$. This process results in patches $\textbf{P} \in \mathbb{R}^{n \times p \times d}$. This patch representation encapsulates the learning of complex relationships among nodes. It also allows for an effective approach to handle graph heterophily via the patch mixer, thereby enhancing the precision of node representation and improving model performance across various levels of graph heterophily.
\paragraph{Alternate Patch Extractor}
To underscore the adaptability and versatility of our model, we consider several alternative patch extractors, including heat filters, bandpass filters~\cite{bahonar2019graph}, and a method employing shared parameters. The heat filter, emulating a graph's diffusion process, works in the spectral domain. Given a normalized adjacency matrix $\mathbf{\tilde{A}}$ with eigenvalues $\mathbf{\lambda}_i$ and eigenvectors $\mathbf{u}_i$, the basis of heat filter extractor is represented as $g_{(heat)}(\mathbf{\Lambda}, t) = \sum^{K}_{k=1}\sigma(e^{-t\mathbf{\Lambda}}$), with $t$ serving as the filter number in our model. Conversely, the bandpass filter isolates specific spectral components of a graph signal. For a normalized adjacency matrix $\tilde{\mathbf{A}}$ with corresponding eigenvalues and eigenvectors, the basis of the bandpass filter extractor is defined as $g_{(bandpass)}(\mathbf{\Lambda}, l, h) = \sum^{K}_{k=1}\sigma(\mathcal{R}(\mathbf{\Lambda}, l, h, k))$, where $l$ and $h$ set the lower and upper-frequency boundaries, respectively, and $\mathcal{R}(\mathbf{\Lambda}, l, h, k)$ is a rectangular function. The concept of shared parameters refers to a design choice where a common set of filter parameters is employed across all filter numbers. Formally, the basis of the share parameters extractor is defined as $g_{(shared)}(\mathbf{\Lambda}) = \sum^{K}_{k=1}\sigma(\textbf{w}_1\mathbf{\Lambda}^k)$. This approach enhances model parsimony and consistency in the patch extraction process, potentially leading to more powerful representations.
\paragraph{Patch Mixer.}
The original MLP-Mixer consists of two types of layers: channel-mixing and patch-mixing layers. In response to \textbf{Q1}, we redefine these concepts for graph-structured data as follows:
\begin{enumerate}
\item \textbf{Feature-Mixing Layer:} Given a graph $\mathcal{G} = (\mathcal{V}, \mathcal{E})$ with nodes $\mathcal{V}$ and edges $\mathcal{E}$, and a patch $\mathbf{P}_v$ for a node $v \in \mathcal{V}$, this layer applies a multilayer perceptron (MLP) to the nodes in $\mathbf{P}_v$ to mix information within the patch, denoted as $\mathbf{P}_v = \mathrm{MLP}(\mathbf{P}_v)$.

\item \textbf{Patch-Mixing Layer:} This layer mixes information across patches. Given the output $\mathbf{P}_v$ of the patch-mixing layer for each node $v \in \mathcal{V}$, it applies an MLP across patches independently for each feature, denoted as $\mathbf{P} = \mathrm{MLP}(\mathbf{P}^T)^T$, where $\mathbf{P}$ is the collection of all patches $\mathbf{P}_v$ for all nodes $v \in \mathcal{V}$.
\end{enumerate}

In our patch mixer, we incorporate two distinct types of information: local contextual information~\cite{kipf2017semi,DBLP:conf/www/TangQWZYM15} and global positional information~\cite{you2019position}. The local contextual information encapsulates the attribute data related to a node and its immediate topological neighborhood. This localized data is captured and processed by the Feature-Mixing Layer, which applies a multilayer perceptron (MLP) to nodes within each patch, effectively mixing the information within these local patches. On the other hand, the global positional information, which refers to a node's broader position and role within the entire graph, is incorporated by the Patch-Mixing Layer. This layer employs an MLP across all patches for each feature, allowing for a global perspective that considers the interconnectedness and relationships between patches. Thus, our model adeptly integrates both local contextual information and global positional information via the feature-mixing layers and patch-mixing layers. 

\paragraph{Generalization Analysis of \name. }
Although we propose patches that adaptively capture the varying degrees of heterophily, the extent to which our algorithm can guarantee generalization performance remains unclear because of the limited theoretical tools. Therefore, in order to shed light on this issue, we propose an upper bound of the error in terms of graph filters, graph heterogeneity, graph features, and labels. The proof of Theorem~\ref{theorem:bound} is provided in Appendix~\ref{sec:proof}.
\begin{restatable}{theorem}{theorembound}
\label{theorem:bound}
For a binary classification problem on graph $\mathcal{G}_{n}$ with $\tilde{\mathbf{A}}$, let $\mathbf{X}=\left(\mathbf{x}_{0}, \mathbf{x}_{1}\right)$ be the learnable input of $g(\tilde{\mathbf{A}})$ in the last layer and $\mathbf{Y}=\left(\mathbf{y}_{0}, \mathbf{y}_{1}\right)$ be the label matrix, the error bound is:
\begin{equation}
    \overline{Er}(\mathbf{X}, \mathbf{Y}) \leq c_1 - \frac{\min_{i \in \mathcal{I}_{g, \delta, \eta}}\psi_{\frac{1}{g\left(1-\mathbf{\lambda}_i\right) \delta_i}}\left(\eta_i\right) \cdot \delta_i \sum_{i \in \mathcal{I}_{\delta, \tilde{\eta}}} \log |\hat{H}_i|}{2n\log \sum_{i \in \mathcal{I}_{\delta, \tilde{\eta}}} g\left(1-\mathbf{\lambda}_i\right) |\hat{H}_i| - 2n\log \sum_{i \in \mathcal{I}_{\delta, \tilde{\eta}}} g\left(1-\mathbf{\lambda}_i\right)}
\end{equation}
where $c_1$ is a constant, $\delta$, $\eta$ are the spectra of $\Delta y$ and $\Delta x$, $\mathcal{I}_{\delta}=\{i|\delta_i \neq 0, i=0, \cdots, n-1\}$ is the indicator set of nonzero elements of $\delta$, and $\psi(x)=\min\{\max\{x,-1\},1\}$ is a clamp finction.
\end{restatable}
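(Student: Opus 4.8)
The plan is to recognize the target inequality as a two-stage argument: first reduce the binary classification error to a spectral-domain ``signal capture'' quantity, then invoke Lemma~\ref{lemma:filter} to trade the explicit filter responses $g(1-\lambda_i)$ for the spectral heterophily expression in the statement. I would begin by writing the error functional as $\overline{Er}(\mathbf{X},\mathbf{Y}) = c_1 - \frac{1}{2n}S$, where $c_1$ collects terms independent of the filter and $S$ measures the alignment between the filtered discriminative signal and the label contrast. Setting $\Delta x = \mathbf{x}_1 - \mathbf{x}_0$ and $\Delta y = \mathbf{y}_1 - \mathbf{y}_0$, and using $\tilde{\mathbf{A}} = \mathbf{U}\mathbf{\Lambda}\mathbf{U}^T$ with the spectra $\delta = \mathbf{U}^T\Delta y$ and $\eta = \mathbf{U}^T\Delta x$, Parseval's identity turns $S$ into a sum $\sum_i g(1-\lambda_i)\,\eta_i\,\delta_i$ over frequency components; the argument $1-\lambda_i$ arises because the last-layer filter effectively acts through the Laplacian spectrum $\tilde{\mathbf{L}} = I - \tilde{\mathbf{A}}$.

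I would then localize the sum to the active index set $\mathcal{I}_{\delta,\tilde{\eta}}$, discarding frequencies with $\delta_i = 0$ that contribute nothing to the label contrast, and replace each per-frequency term by its clamped surrogate. The clamp $\psi$ encodes the saturation intrinsic to binary prediction: once a frequency's filtered response is correctly aligned and large enough, additional magnitude yields no extra margin, so each contribution is lower-bounded by $\psi_{\frac{1}{g(1-\lambda_i)\delta_i}}(\eta_i)\cdot\delta_i\,g(1-\lambda_i)$. Since $g(1-\lambda_i)\ge 0$, I can pull the smallest clamped product out of the sum, writing $S \ge \min_{i\in\mathcal{I}_{g,\delta,\eta}}\big[\psi_{\frac{1}{g(1-\lambda_i)\delta_i}}(\eta_i)\cdot\delta_i\big]\sum_{i\in\mathcal{I}_{\delta,\tilde{\eta}}} g(1-\lambda_i)$.

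The final step applies Lemma~\ref{lemma:filter}, restated with $g(\lambda)$ replaced by $g(1-\lambda)$ and the sums restricted to $\mathcal{I}_{\delta,\tilde{\eta}}$: the lemma lower-bounds $\sum_i g(1-\lambda_i)$ by the ratio $\frac{\sum_i \log|\hat{H}_i|}{\log\sum_i g(1-\lambda_i)|\hat{H}_i| - \log\sum_i g(1-\lambda_i)}$. Substituting this into the bound on $S$ and inserting the result into $\overline{Er} = c_1 - \frac{1}{2n}S$ yields precisely the claimed upper bound, with the factor $2n$ emerging from the error normalization.

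The hard part will be the clamp step. I expect the main obstacle to be justifying, rigorously and uniformly over $\mathcal{I}_{\delta,\tilde{\eta}}$, that each per-frequency margin contribution dominates its clamped surrogate $\psi_{\frac{1}{g(1-\lambda_i)\delta_i}}(\eta_i)\cdot\delta_i\,g(1-\lambda_i)$, and that factoring out the minimum over $\mathcal{I}_{g,\delta,\eta}$ is valid and compatible with the restricted heterophily sum over $\mathcal{I}_{\delta,\tilde{\eta}}$. This requires a case analysis on the sign and magnitude of $\eta_i$ relative to the threshold $1/(g(1-\lambda_i)\delta_i)$. A secondary subtlety is checking that the change of spectral variable from $\lambda_i$ to $1-\lambda_i$ preserves the hypotheses of Lemma~\ref{lemma:filter}, in particular the nonnegativity of the filter responses required for the logarithms to be well defined.
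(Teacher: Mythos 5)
Your proposal follows essentially the same route as the paper's proof: the paper likewise reduces $\overline{Er}$ to a constant minus a clamped alignment term (via a third-order Taylor bound on the sigmoid squared loss and the norm chain $\|\psi(\mathbf{z})\|_6^6 \leq \cdots \leq \|\psi(\mathbf{z})\|_2^2 \leq n$, which is what your ``$c_1$ collects filter-independent terms'' step amounts to), then constructs $\tilde{\eta}_i = \psi_{\frac{1}{g(1-\lambda_i)\delta_i}}(\eta_i)$, factors out $m_g = \min_{i}\tilde{\eta}_i\delta_i$, and invokes the argument of Lemma~\ref{lemma:filter} with $g(1-\lambda_i)$ summed over $\mathcal{I}_{\delta,\tilde{\eta}}$, exactly as you outline. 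The difficulties you flag (justifying the clamped surrogate uniformly, factoring out the minimum, and nonnegativity of $g(1-\lambda_i)$ for the logarithms) are precisely the steps the paper handles with its explicit case analysis of $\psi$ and the chained inequalities in its Eq.~(19)--(20).
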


\subsection{Fast-\name}
While the heterophily-aware patch function is a powerful method for adaptively capturing graph heterophily, its computational complexity can be the bottleneck for large-scale graph datasets due to the computations of eigen decomposition (i.e., $O(n^3)$). To address this challenge, we seek to develop an efficient algorithm that retains the performance of our model without compromising scalability and applicability. In pursuit of an efficient patch extraction method, we uncover the relationship between our adaptive polynomial filters-based method and the non-adaptive variant inspired by a classic optimization-based ranking algorithm which aims to minimize the following objective function for each node $v$.
\begin{equation}
\label{eq:ppr}
    \mathcal{J}(\mathbf{r}_v)=c\mathbf{r}_v^T(\mathbf{I}-\tilde{\mathbf{A}})\mathbf{r}_v+(1-c)\|\mathbf{r}_v-\mathbf{e}_v\|_2^2
\end{equation}
where $\mathbf{r}_v$ is the ranking vector w.r.t. node $v$ and $\mathbf{e}_v$ is a node indicator vector. To solve this optimization problem, the iterative approach can be formulated as 
\begin{equation}
\textbf{r}_v = c \tilde{\mathbf{A}}\textbf{r}_v + (1-c) \textbf{e}_v
\end{equation}
and the closed-form solution is $\textbf{r}_v = (1 - c)(1 - c \tilde{\mathbf{A}})^{-1} \textbf{e}_v$.



By the Neumann series expansion, the closed-form solution can be approximated by
\begin{equation}
\textbf{r}_v = (1 - c) \sum_{k=0}^{K} c^k \tilde{\mathbf{A}}^k \mathbf{e}_v = (1 - c) \sum_{k=0}^{K} c^k (\mathbf{U}\mathbf{\Lambda}^k \mathbf{U}^T) \textbf{e}_v.
\end{equation}
Obviously, by selecting top-$p$ nodes based on $\mathbf{r}_v$, it can be considered as a special case of Eq. (6) by substituting $(c\mathbf{\Lambda})$ with $\textbf{w}\mathbf{\Lambda}$ 
\begin{equation}
\phi_{\text{fast}}(\tilde{\mathbf{A}}) = \text{top-}p_{col}(\mathbf{R})=\text{top-}p_{col}([\mathbf{r}_1, \cdots, \mathbf{r}_n])
\end{equation}
Compared with heterophily-aware patch extractor's $O(n^3)$ complexity,  Fast-$\name$ is more scalable and computation efficient with $O((n + m)t)$ where $m$ is the number of edges and $t$ is the number of iterative steps. 
Fast-$\name$ extractor computes proximity scores in an iterative manner. In each iteration, the algorithm must update the rank value for each node in the graph, which is a process with a time complexity of $O(n + m)$,
\section{Experiment}
\subsection{Experiment Setup}
\paragraph{Datasets.} We evaluate our model, GraphPatcher, using several standard graph datasets originated from~\cite{fout2017protein}: Cora, Citeseer, PubMed, Texas, Squirrel, Chameleon, Cornell, Wisconsin, and Actor. These datasets are diverse in terms of their sizes and degrees of heterophily. They cover a range of domains including citation networks (Cora, Citeseer, PubMed\cite{sen2008collective, namata2012query}), web page networks (Texas, Cornell, Wisconsin\cite{garcia2016using}), animal interaction networks (Squirrel, Chameleon\cite{rozemberczki2021multi}), and Actor is an actor co-occurrence network\cite{10.1145/1557019.1557108}. Homophily datasets are Cora, Citeseer, and PubMed, while Texas, Squirrel, Chameleon, Cornell, Wisconsin, and Actor are heterophily datasets. The statistics of these datasets are summarized in Appendix~\ref{sec:data}.
    
\paragraph{Comparison Methods.}We compare our model with several baseline methods, including GPRGNN~\cite{chien2021adaptive}, ChebNet~\cite{tang2019chebnet}, APPNP~\cite{DBLP:conf/iclr/KlicperaBG19}, GCN-JKNet~\cite{DBLP:conf/icml/XuLTSKJ18}, GCN~\cite{kipf2017semi}, GAT~\cite{velickovic2018graph}, GraphSage~\cite{hamilton2017inductive}, FAGCN~\cite{li2021beyond}, H2GCN~\cite{zhu2020beyond}, and MLP. GPRGNN aims to improve the performance of GNNs by incorporating graph pooling and refinement. ChebNet utilizes Chebyshev polynomial filters to perform graph convolutions. APPNP introduces personalized PageRank to iteratively propagate information in the graph. GCNJKNet combines graph convolutional networks (GCN) and Jumping Knowledge Networks (JKNet) to capture multi-scale information. GCN is a classic graph convolutional network that operates on a fixed neighborhood. GAT employs attention mechanisms to selectively aggregate information from neighboring nodes. GraphSage performs inductive learning by sampling and aggregating features from a node's local neighborhood. FAGCN incorporates both low and high-frequency signals. H2GCN focuses on analyzing heterophilic graphs by considering higher-order neighborhoods. MLP is an attribute-only-based multi-layer perceptron. We compare our proposed model against these baselines to evaluate its performance and effectiveness in handling graph heterophily.

\begin{table*}[!tbp]
    \centering
    \vspace{-5mm}
    \resizebox{\textwidth}{!}{%
        \begin{tabular}{l|ccc|ccccccc}
        \hline
        & \multicolumn{3}{c|}{Homophily Dataset} & \multicolumn{6}{c}{Heterophily Dataset} \\ \cline{2-10}
        Model & Cora & CiteSeer & PubMed & Texas & Squirrel & Chameleon & Cornell & Wisconsin & Actor \\ \hline
        $\frac{\sum{H_i}}{n}$ & 0.19 & 0.26 & 0.2 & 0.89 & 0.78 & 0.77 & 0.89 & 0.84 & 0.76 \\ \hline
        MLP & 49.7 $\pm$ 1.5 & 48.8 $\pm$ 4.6 & 69.8 $\pm$ 1.1 & 74.4 $\pm$ 5.6 & 33.6 $\pm$ 1.4 & 45.8 $\pm$ 2.0 & 69.9 $\pm$ 2.9 & 80.4 $\pm$ 3.7 & 35.0 $\pm$ 0.7 \\ \hline
        GPRGNN & 76.1 $\pm$ 0.9 & 63.9 $\pm$ 1.2 & 75.6 $\pm$ 0.7 & 64.2 $\pm$ 5.3 & 30.4 $\pm$ 2.0 & 36.3 $\pm$ 2.1 & 59.1 $\pm$ 4.4 & 72.8 $\pm$ 4.5 & 30.9 $\pm$ 0.7 \\ \hline
        ChebNet & 73.7 $\pm$ 0.7 & 59.7 $\pm$ 1.1 & 75.1 $\pm$ 0.8 & 74.4 $\pm$ 5.8 & 33.4 $\pm$ 1.0 & 47.5 $\pm$ 1.5 & 69.7 $\pm$ 2.8 & 77.9 $\pm$ 3.0 & 34.8 $\pm$ 1.0 \\ \hline
        APPNP & 76.5 $\pm$ 1.6 & 64.2 $\pm$ 1.3 & 75.8 $\pm$ 1.7 & 56.3 $\pm$ 3.9 & 27.8 $\pm$ 0.7 & 38.8 $\pm$ 1.5 & 42.9 $\pm$ 3.7 & 52.3 $\pm$ 4.9 & 28.4 $\pm$ 0.6 \\ \hline
        GCNJKNet & 76.9 $\pm$ 1.7 & 63.4 $\pm$ 1.4 & 74.1 $\pm$ 0.9 & 57.8 $\pm$ 2.0 & 26.3 $\pm$ 0.6 & 37.5 $\pm$ 1.9 & 40.9 $\pm$ 4.6 & 49.6 $\pm$ 4.8 & 27.9 $\pm$ 0.8 \\ \hline
        GCN & 76.1 $\pm$ 2.3 & 63.9 $\pm$ 2.4 & 75.8 $\pm$ 0.9 & 58.7 $\pm$ 2.8 & 27.1 $\pm$ 0.5 & 39.9 $\pm$ 2.0 & 40.3 $\pm$ 3.3 & 49.4 $\pm$ 3.0 & 28.4 $\pm$ 0.7 \\ \hline
        GAT & 75.6 $\pm$ 2.3 & 63.4 $\pm$ 1.0 & 75.2 $\pm$ 1.0 & 58.3 $\pm$ 4.8 & 28.7 $\pm$ 0.9 & 42.8 $\pm$ 1.5 & 46.3 $\pm$ 4.3 & 51.1 $\pm$ 6.2 & 28.9 $\pm$ 0.5 \\ \hline
        GraphSage & 73.9 $\pm$ 0.5 & 63.7 $\pm$ 1.6 & 75.5 $\pm$ 1.1 & 74.3 $\pm$ 3.7 & 36.2 $\pm$ 0.8 & 47.4 $\pm$ 1.8 & 69.1 $\pm$ 3.5 & 76.9 $\pm$ 4.5 & 34.6 $\pm$ 0.7 \\ \hline
        FAGCN & 73.2 $\pm$ 0.7 & 61.2 $\pm$ 0.7 & 74.9 $\pm$ 1.1 & 65.4 $\pm$ 2.8 & 33.9 $\pm$ 0.8 & 43.2 $\pm$ 0.9 & 55.4 $\pm$ 5.1 & 65.7 $\pm$ 4.3 & 34.4 $\pm$ 0.5 \\ \hline
        H2GCN & 76.5 $\pm$ 2.3 & 60.5 $\pm$ 0.9 & 75.5 $\pm$ 1.9 & 81.1 $\pm$ 5.2 & 32.2 $\pm$ 1.2 & 54.0 $\pm$ 1.1 & 68.5 $\pm$ 2.9 & 78.0 $\pm$ 3.1 & 34.9 $\pm$ 0.4 \\ \hline
        BM-GCN & 75.6 $\pm$ 2.3 & 63.0 $\pm$ 1.0 & 75.2 $\pm$ 1.0 & 75.4 $\pm$ 4.8 & 34.2 $\pm$ 0.9 & 53.6 $\pm$ 1.5 & 62.3 $\pm$ 4.3 & 75.4 $\pm$ 5.2 & 34.7 $\pm$ 0.5 \\ \hline
        \midrule
         Fast-$\name$ & 76.3 $\pm$ 0.6 & 62.5 $\pm$ 1.8 & 75.3 $\pm$ 0.9 & 81.3 $\pm$ 2.2 & 36.7 $\pm$ 1.1 & 55.4 $\pm$ 1.1 & 69.9 $\pm$ 2.3 & 80.5 $\pm$ 2.4 & 36.7 $\pm$ 0.9 \\ \hline
         $\name$ & \textbf{77.3} $\pm$ 0.6 & \textbf{64.5} $\pm$ 1.8 & \textbf{76.3} $\pm$ 0.9 & \textbf{85.0} $\pm$ 3.5 & \textbf{37.8} $\pm$ 1.6 & \textbf{57.3} $\pm$ 1.2 & \textbf{71.5} $\pm$ 3.4 & \textbf{83.2} $\pm$ 3.4 & \textbf{37.8} $\pm$ 0.7 \\ \hline
    \end{tabular}%
    }
    \caption{Results for different models on the homophily and heterophily datasets.}
    \vspace{-7mm}
    \label{tab:overall performance}
\end{table*}
\paragraph{Implementation Details.} We employ the Adam optimizer with a learning rate of 0.005 and a weight decay of 5e-4 for training the model. The default training setting for all models is performed using torch-geometric default masks(train, validation, and test) and trained for a maximum of 500 epochs, a hidden dimension of 64, a dropout rate of 0.5, and a number of layers of 2. Early stopping is applied with a patience of 50 epochs, which monitors the validation loss and terminates the training if there is no improvement observed within the specified patience. This helps prevent overfitting and ensures the model generalizes well to unseen data. To assess the performance of the models, we use the standard accuracy metric, which is commonly adopted in node classification tasks. 
\begin{wraptable}[8]{r}{0.5\textwidth}
\centering
\small
\begin{tabular}{l|c|c}
\midrule
\textbf{Model} & \textbf{Cora}   & \textbf{Chameleon} \\
\hline
GCN   & 74.1 $\pm$ 1.0 & 56.0 $\pm$ 2.0\\
FAGCN & 74.8 $\pm$ 0.4 & 48.3 $\pm$ 2.2 \\
\name & \textbf{77.3} $\pm$ 0.6 & \textbf{57.3} $\pm$ 1.2 \\
\hline
\end{tabular}
\caption{Ablation study results comparing GCN, FAGCN on Cora and Chameleon datasets with induced graphs from heterophile-aware patch function.}
\label{tab:induced}
\end{wraptable}
\subsection{Quantitive Analysis}
In this section, we present a comprehensive quantitative analysis comparing the performance of our proposed method with various state-of-the-art graph neural network models. Our proposed method consistently outperforms the other models across both homophily and heterophily datasets. 
In particular, the performance gains are more prominent in the challenging heterophily datasets, where the underlying graph structure exhibits more complex relationships between nodes. The results demonstrate the effectiveness of our approach in capturing and leveraging the diverse connectivity patterns present in these datasets. 
We also examine different variations of our proposed method, including the fast $\name$ and standard $\name$ model. The performances of these variations are relatively similar, indicating the robustness and versatility of our method in handling different graph structures. In summary, the quantitative analysis demonstrates the superiority of our approach compared to the existing models, especially in the context of heterophily datasets.

\paragraph{Ablation Study: Induced Graph.}We conduct an ablation study comparing the performance using several other popular graph neural network models on induced graphs obtained from our patch function. The baseline models include GCN, H2GCN, and the evaluation datasets used include Cora and Chameleon.

We summarize the results of the ablation study in Table \ref{tab:induced}, which includes the performance metrics of each model on the dataset. The results in Table \ref{tab:induced} demonstrate that our method achieves better performance compared to the other models, highlighting the effectiveness of our heterophily-aware patch function and the adaptive polynomial filters. Additionally, the comparison between H2GCN and FAGCN reveals the importance of incorporating adaptive polynomial filters in addressing graph heterophily.
\begin{wrapfigure}[13]{r}{0.4\linewidth}
    \centering
    \setlength{\belowcaptionskip}{-40pt}
    \includegraphics[width=\linewidth]{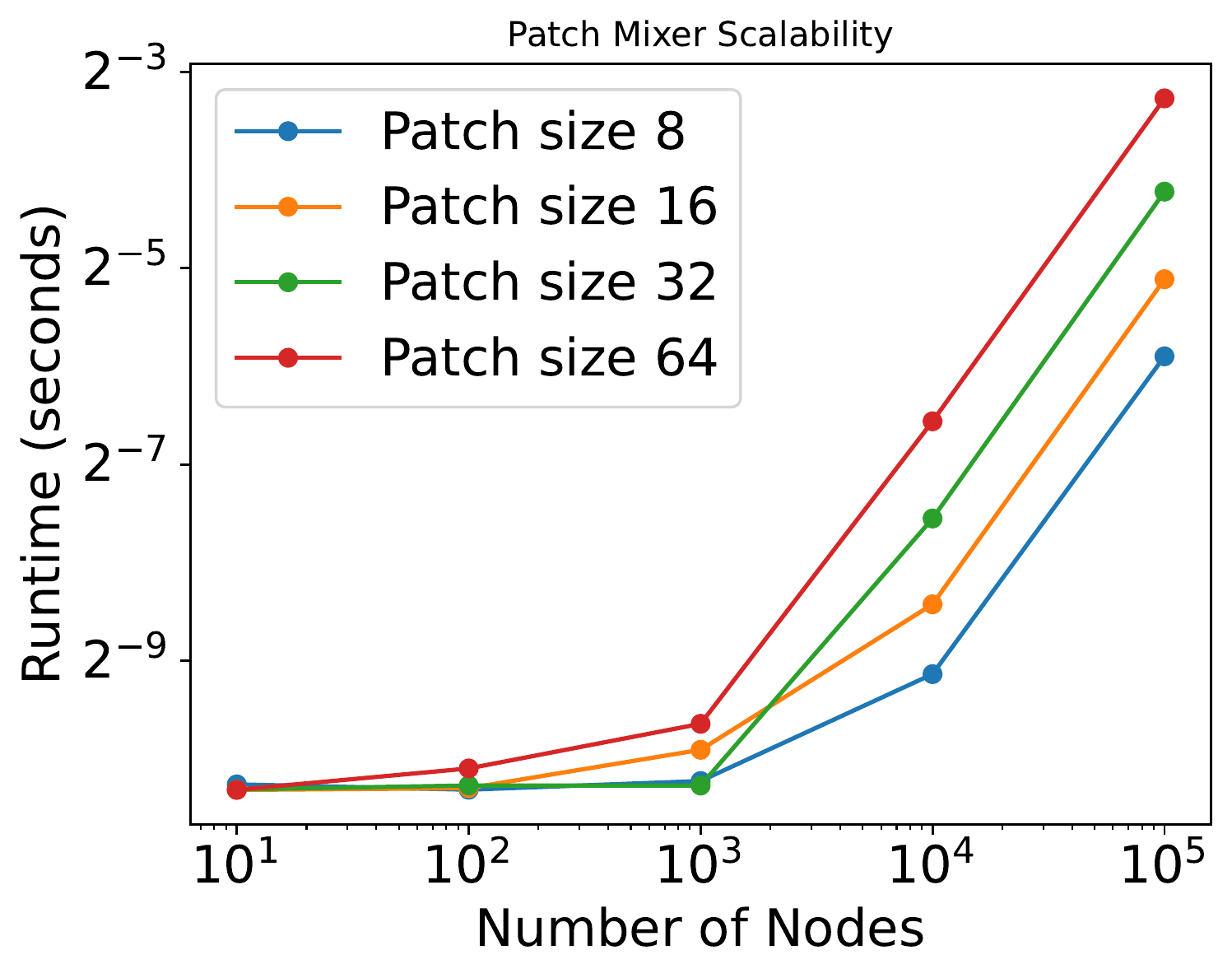}
    \caption{Scalability analysis.}
    \label{fig:label_perm}
\end{wrapfigure}
\begin{wraptable}[9]{r}{0.5\textwidth}
\centering
\small
\begin{tabular}{l|c|c}
\midrule
\textbf{Model} & \textbf{Cora} & \textbf{Chameleon} \\
\hline
Bandpass Filter   & 71.3 $\pm$ 0.8  & 52.1 $\pm$ 1.5 \\
Heat Filter   & 70.4 $\pm$ 1.69  & 47.7 $\pm$ 1.73 \\
Shared parameters & 73.0 $\pm$ 1.3  & 52.2 $\pm$ 1.1 \\
Heterophily-aware  & \textbf{77.3} $\pm$ 0.6  & \textbf{57.3} $\pm$ 1.2 \\
\hline
\end{tabular}
\caption{Performance comparison of alternative patch functions on Cora and Chameleon datasets.}
\label{tab:alternative_patch_functions}
\end{wraptable}
\paragraph{Ablation Study: Alternative Patch Functions.}
We assess the impact of alternative parameters in our method by comparing the performance of our model using different patch functions and settings.

The results in Table \ref{tab:alternative_patch_functions} indicate that the choice of patch function and parameter settings plays a significant role in the performance of our model. Using a shared parameter setting yields better results across the datasets, suggesting that it effectively captures the underlying graph structure and accurately represents the relationships among nodes in the graph. Moreover, the comparison between bandpass and heat filters reveals the importance of selecting an appropriate filter function for the given task and dataset.

\begin{figure}
\centering
\includegraphics[width=0.9\textwidth]{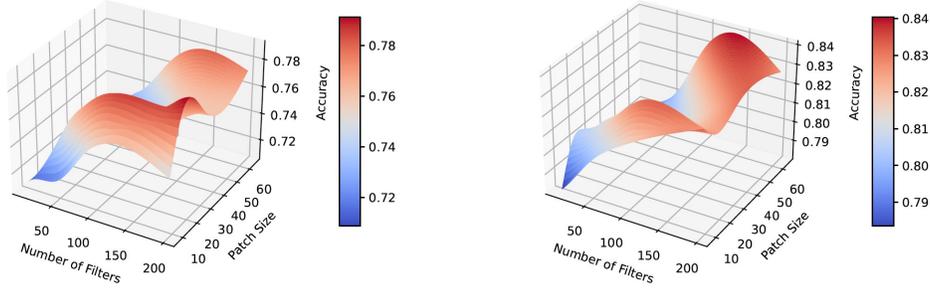}
\caption{Parameter sensitivity analysis results for Cora(left) and Texas(right) datasets, respectively.}
\label{fig:param_sensitivity}

\end{figure}
\paragraph{Ablation Study: Patch Order Importance.}
We investigate the importance of patch order in our method at table~\ref{tab:patch_order_importance}. We compare the performance of our model with different patch orders by random shuffling, where the patch order refers to the number of neighbor nodes considered in the patch extraction process. Our findings reveal that shuffling the patch order leads to significant degrading performance, indicating that considering patch node order plays a vital role in capturing the complex graph structures and addressing heterophily.
\paragraph{Scalability Analysis on patch mixer.}
We analyze the runtime of the algorithm as the number of nodes and patch size varies to assess the scalability of our proposed approach in Figure~\ref{fig:label_perm}. We perform experiments on synthetic datasets with varying numbers of nodes from \{10, 100, 1,000, 10,000, 100,000\}. We also consider different patch sizes, specifically \{8, 16, 32, 64\}. The runtime results are reshaped into a matrix for visualization purposes. Figure~\ref{fig:label_perm} shows the runtime in seconds on a logarithmic scale for the different patch sizes as a function of the number of nodes. As expected, the runtime increases with the number of nodes and patch size. Nevertheless, our method demonstrates reasonable scalability as the growth in runtime is sub-linear, indicating that our approach can handle large-scale graph datasets effectively. The results also highlight the importance of selecting an appropriate patch size to balance the trade-off between computational efficiency and performance.

\begin{wraptable}[8]{r}{0.5\linewidth}
    \centering
    \small
    \begin{tabular}{l|c|c}
        \midrule
        \textbf{Dataset} & \textbf{Random Order} & \textbf{Ranked Order} \\
        \hline
        Cora       & $71.4 \pm 1.1$ & $\textbf{77.3} \pm 0.6$ \\
        Citeseer   & $60.7 \pm 1.4$  & $\textbf{64.5} \pm 1.8$ \\
        Squirrel   & $35.4 \pm 1.2$ & $\textbf{37.8} \pm 1.6$ \\
        Chameleon  & $51.0 \pm 1.2$ & $\textbf{57.3} \pm 1.1$ \\
        \hline
    \end{tabular}
    \caption{Importance of patch order.}
\label{tab:patch_order_importance}
\end{wraptable}

\paragraph{Parameter Sensitivity Analysis.}
We conduct a parameter sensitivity analysis to investigate the impact of patch size and the number of filters in Figure~\ref{fig:param_sensitivity}. The analysis was performed on two datasets, namely the Cora and Texas datasets. We varied the patch size from \{8, 16, 32, 64\} and the number of filters from \{10, 50, 100, 200\}. We measure the accuracy of the model for each combination of parameters. Figure \ref{fig:param_sensitivity} shows the results of the parameter sensitivity analysis. The plots reveal that for both datasets, increasing the patch size and the number of filters generally leads to higher accuracy, although the improvements tend to plateau beyond certain values. This indicates that our model can effectively handle different patch sizes and filter configurations while highlighting the importance of selecting appropriate parameters to achieve optimal performance.
\section{Related Work}
\paragraph{Graph Neural Networks.}
Traditional Graph Neural Networks have achieved remarkable results in learning representations from graph-structured data, particularly under the assumption of homophily, where connected nodes are more likely to share the same label. These models, including Graph Convolutional Networks~\cite{kipf2017semi}, Graph Attention Networks (GATs)~\cite{velickovic2018graph}, and GraphSAGE~\cite{hamilton2017inductive}, rely heavily on localized filters or attention mechanisms for information aggregation. More recently, transformer-based models~\cite{Lin_2022_CVPR} and MLP-Mixer-inspired architecture~\cite{cong2023we, he2022generalization} have emerged, extending the capability of GNNs to capture  global positional information. Nevertheless, these models are still inherently tied to the concept of homophily and often struggle when faced with heterophilic scenarios.

\paragraph{Learning on Heterophily Graphs.}
The challenge of heterophily, where connected nodes are less likely to share the same label, necessitated the development of specialized GNNs. These heterophily GNNs, such as H2GCN~\cite{zhu2020beyond}, FAGCN~\cite{fagcn2021}, and BM-GCN~\cite{he2022block}, employ a variety of strategies, from using both low-pass and high-pass filters to feature augmentation and block-matching techniques. While they exhibit improved performance on heterophilic graphs, they often involve intricate design choices and lack robustness to different heterophily levels. In contrast, our proposed $\name$ model strikes a balance between the simplicity of MLP-Mixer architecture and the versatility of adaptive polynomial filters, efficiently utilizing both low-pass and high-pass information. More importantly, it provides a theoretical link between adaptive polynomial filters and heterophily degree, thereby offering a more generalized and adaptive solution for graph heterophily.
\section{Conclusion}

In conclusion, our study conducts a thorough theoretical analysis of the impact of graph heterophily on GNN spectral filters and highlights the necessity of designing adaptive polynomial filters to ensure generalization performance across varying degrees of heterophily. The proposed $\name$ model, inspired by the MLP-Mixer architecture, effectively addresses graph heterophily and consistently outperforms traditional GNNs and state-of-the-art heterophily GNNs in node classification tasks. This work underscores the significance of developing adaptive and robust GNN model architectures to tackle graph heterophily, paving the way for future research in graph representation learning.

\bibliographystyle{unsrt} 
\bibliography{neurips}    

\newpage
\appendix

\section{Symbols and notations}\label{sec:notation}
\begin{table}[htbp]
\caption{Symbols and notations.}
\centering
\begin{tabular}{cl}
\hline 
Symbol & Description \\
\hline
\( \mathcal{G} \) & Graph \\
\( \mathcal{V} \) & Set of vertices (nodes) of the graph \( \mathcal{G} \) \\
\( \mathcal{E} \) & Set of edges of the graph \( \mathcal{G} \) \\
\( \mathbf{X} \) & Node feature matrix of $\mathcal{G}$.\\
\( \mathcal{Y} \) & Set of labels of $\mathcal{G}$.\\
\( n \) & Number of nodes in the graph \( \mathcal{G} \) \\
\( \mathbf{L}, \mathbf{A} \) & Laplacian matrix and adjacency matrix of \( \mathcal{G} \) \\
\( \tilde{\mathbf{L}}, \tilde{\mathbf{A}} \) & Normalized laplacian matrix and adjacency matrix of \( \mathcal{G} \) \\
\( \mathcal{N}(v) \) & Set of neighbors of node \( v \) \\
\( g(\mathbf{\Lambda}) \) & A spectral graph filter of eigenvalue \( \mathbf{\Lambda} \) \\
\hline
\end{tabular}
\label{tab:notation}
\end{table}
\section{Proofs of the Theorems}\label{sec:proof}
\lemmafilter*
\begin{proof}
According to the weighted AM-GM inequality, we have
\begin{equation}
\begin{aligned}
&\frac{g(\lambda_0)}{\sum_{0\leq i\leq n-1} g(\lambda_i)}|\hat{H}_0|+\cdots +\frac{g(\lambda_{n-1})}{\sum_{0\leq i\leq n-1} g(\lambda_i)}|\hat{H}_{n-1}| \\
\geq & |\hat{H}_0|^{\frac{g(\lambda_0)}{\sum_{0\leq i\leq n-1}g(\lambda_i)}} \cdot\cdots \cdot |\hat{H}_{n-1}|^{\frac{g(\lambda_{n-1})}{\sum_{0\leq i\leq n-1}g(\lambda_i)}}
\end{aligned}
\end{equation}
As we have $g(\lambda_i) \in [0, 2]$ for all $i$, then
\begin{equation}
\begin{aligned}
& \frac{\sum_{0\leq i\leq n-1} g(\lambda_i) |\hat{H}_i|}{\sum_{0\leq i\leq n-1} g(\lambda_i)} \geq |\hat{H}_0|^{\frac{1}{\sum_{0\leq i\leq n-1}g(\lambda_i)}} \cdot\cdots \cdot |\hat{H}_{n-1}|^{\frac{1}{\sum_{0\leq i\leq n-1}g(\lambda_i)}} \\
= & \prod_{0\leq i\leq n-1} |\hat{H}_i|^{\frac{1}{\sum_{0\leq k\leq n-1}g(\lambda_k)}}
\end{aligned}
\end{equation}
Since both sides of the inequality are greater than 0, we take the logarithm
\begin{equation}
\log \left(\frac{\sum_{0\leq i\leq n-1} g(\lambda_i) |\hat{H}_i|}{\sum_{0\leq i\leq n-1} g(\lambda_i)}\right) \geq \frac{1}{\sum_{0\leq k\leq n-1}g(\lambda_k)} \sum_{0\leq i\leq n-1} \log |\hat{H}_i|
\end{equation}
Hence completes the proof.
\end{proof}

\theorembound*
\begin{proof}
Let $\psi$ be the clamp function defined as
\begin{equation}
    \psi(x) \triangleq \min \{\max \{x,-1\}, 1\}=\left\{\begin{array}{ll} 1 & x>1 \\
    x & -1<x<1 \\
    -1 & x<-1 \end{array}\right. ,
\end{equation}
\begin{equation}
\begin{aligned}
    d(x, \psi(x)) &\triangleq \left(\frac{1}{1+e^{x}}-y\right)^{2}-\left(\frac{1}{1+e^{\psi(x)}}-y\right)^{2} \\
    & \in\left\{\begin{array}{ll} {[-(\frac{1}{1+e})^{2}, 0]} & x>1, y=0 \\
    {[0,(\frac{1}{1+c})^{2}]} & x>1, y=1 \\
    {[0,(\frac{1}{1+e})^{2}]} & x<-1, y=0 \\
    {[-(\frac{1}{1+e})^{2}, 0]} & x<-1, y=1 \end{array}\right. \leq \frac{1}{(1+e)^2},
\end{aligned}
\end{equation}
and for $x\in [-1, 1]$, the first-order Taylor expansion of $\frac{1}{1+e^x}$ is $\frac{1}{2}-\frac{1}{4}x$. Denote $R(x)$ as the remainder term, that is, $R(x)=\frac{1}{1+e^x}-\frac{1}{2}+\frac{1}{4}x$. since
\begin{equation}
    \left(\frac{1}{\left(1+e^{x}\right)^{2}}\right)^{\prime \prime \prime}=-\frac{e^{x}\left(-4 e^{x}+e^{2 x}+1\right)}{\left(1+e^{x}\right)^{4}} \leq\left.\left(\frac{1}{\left(1+e^{x}\right)^{2}}\right)^{\prime \prime \prime}\right|_{x=0}=\frac{1}{8},
\end{equation}
we have
\begin{equation}
    |R(x)| \leq \max \left|\left(\frac{1}{\left(1+e^{x}\right)^{2}}\right)^{\prime \prime \prime}\right| \frac{|x|^{3}}{3 !}=\frac{|x|^{3}}{48}
\end{equation}
Therefore,
\begin{equation}
\begin{aligned}
    & \left(\frac{1}{1+e^{x}}-y\right)^{2}=\left(\frac{1}{1+e^{\psi(x)}}-y\right)^{2}+d(x, \psi(x)) \\
    = & \left(\frac{1}{2}-\frac{1}{4} \psi(x)-y+R(\psi(x))\right)^{2}+\frac{1}{(1+e)^{2}} \\
    = & \left(\frac{1}{2}-\frac{1}{4} \psi(x)-y\right)^{2} + \left(R(\psi(x))\right)^2 + 2R(\psi(x))\left(\frac{1}{2}-\frac{1}{4} \psi(x)-y\right) + \frac{1}{(1+e)^{2}} \\
    = & \left(\frac{1}{2}-\frac{1}{4} \psi(x)-y\right)^{2} + \left(\frac{|\psi(x)|^{3}}{48}\right)^2 + \frac{|\psi(x)|^{3}}{24}\left|\frac{1}{2}-\frac{1}{4} \psi(x)-y\right| + \frac{1}{(1+e)^{2}} \\
    \leq & \left(\frac{1}{2}-y\right)^{2} - \frac{(1-2y)\psi(x)}{4} + \frac{\psi(x)^2}{16} + \frac{|\psi(x)|^{6}}{2304} + \frac{|\psi(x)|^{3}}{24}\left(\frac{1}{4}|\psi(x)|+\frac{1}{2}\right) + \frac{1}{(1+e)^{2}} \\
    \leq & \frac{1}{4}-\frac{(1-2 y) \psi(x)}{4}+\frac{\psi(x)^{2}}{16}+\frac{|\psi(x)|^{3}}{48}+\frac{\psi(x)^{4}}{96}+\frac{|\psi(x)|^{6}}{2304}+\frac{1}{(1+e)^{2}}
\end{aligned}
\end{equation}
According to the above conclusion,
\begin{equation}
\begin{aligned}
    & Er\left(\mathbf{x}_0, \mathbf{y}_0\right) = \sum_l\left(\frac{1}{1+e^{g(I-\tilde{\mathbf{A}})\left(\mathbf{x}_{1 l}-\mathbf{x}_{0 l}\right)}}-\mathbf{y}_{0 l}\right)^2 \\
    \leq & \frac{n}{4}-\frac{1}{4}\left(\mathbf{y}_1-\mathbf{y}_0\right)^{\top} \psi(\mathbf{z})+\frac{\|\psi(\mathbf{z})\|_2^2}{16}+\frac{\|\psi(\mathbf{z})\|_3^3}{48}+\frac{\|\psi(\mathbf{z})\|_4^4}{96}+\frac{\|\psi(\mathbf{z})\|_6^6}{2304}+\frac{n}{(1+e)^2}
\end{aligned}
\end{equation}
$\mathbf{z}=g(I-\tilde{\mathbf{A}})(\mathbf{x}_1-\mathbf{x}_0)_l$ , noting that $C \leq\|\psi(\mathbf{z})\|_6^6 \leq\|\psi(\mathbf{z})\|_4^4 \leq\|\psi(\mathbf{z})\|_3^3 \leq\|\psi(\mathbf{z})\|_2^2 \leq n$, then we have
\begin{equation}\label{equ:signalError}
\begin{aligned}
    & Er\left(\mathbf{x}_0, \mathbf{y}_0\right) \leq \frac{n}{4}-\frac{1}{4}\left(\mathbf{y}_1-\mathbf{y}_0\right)^{\top} \psi(\mathbf{z})+\frac{217}{2304}\|\psi(\mathbf{z})\|_2^2+\frac{n}{(1+e)^2} \\
    = & c_{1} n-\frac{1}{4} \sum_l \psi\left((\mathbf{y}_{1 l}-\mathbf{y}_{0 l})(g(I-\tilde{\mathbf{A}})(\mathbf{x}_1-\mathbf{x}_0)_l)\right)
\end{aligned}
\end{equation}
where $c_1$ is a constant.  For any $\eta$, we construct $\tilde{\eta}_i=\psi_{\frac{1}{g\left(1-\lambda_i\right) \delta_i}}\left(\eta_i\right)$ such that $\left|\tilde{\eta}_i g\left(1-\lambda_i\right) \delta_i\right| \leq 1$ and $\sum_{i \in \mathcal{I}_{g, \delta, \eta}} \psi\left(\eta_i g\left(1-\lambda_i\right) \delta_i\right)=\sum_{i \in \mathcal{I}_{g, \delta, \delta, \tilde{\eta}}} \tilde{\eta}_i g\left(1-\lambda_i\right) \delta_i$. We define $m_g \triangleq \min_{i \in \mathcal{I}_{g, \delta, \eta}}\tilde{\eta}_i\delta_i$. From the proof of Lemma~\ref{lemma:filter}, for any $g(\cdot)$ and $\delta$, we have
\begin{equation}\label{equ:filterBound}
\begin{aligned}
    & \sum_{i=0}^{n-1} \psi\left(\eta_i g\left(1-\lambda_i\right) \delta_i\right)=\sum_{i \in \mathcal{I}_{g, \delta, \eta}} \psi\left(\eta_i g\left(1-\lambda_i\right) \delta_i\right)=\sum_{i \in \mathcal{I}_{g, \delta, \tilde{\eta}}} \tilde{\eta}_i g\left(1-\lambda_i\right) \delta_i \\
    \geq & m_g \sum_{i \in \mathcal{I}_{g, \delta, \tilde{\eta}}} g\left(1-\lambda_i\right) = m_g \left(\sum_{i \in \mathcal{I}_{\delta, \tilde{\eta}}} g\left(1-\lambda_i\right)+\sum_{i \in \mathcal{I}_{g}} g\left(1-\lambda_i\right)-\sum_{i=0}^{n-1} g\left(1-\lambda_i\right) \right) \\
    = & m_g \sum_{i \in \mathcal{I}_{\delta, \tilde{\eta}}} g\left(1-\lambda_i\right) \geq m_g \frac{\sum_{i \in \mathcal{I}_{\delta, \tilde{\eta}}} \log |\hat{H}_i|}{\log \sum_{i \in \mathcal{I}_{\delta, \tilde{\eta}}} g\left(1-\lambda_i\right) |\hat{H}_i|-\log \sum_{i \in \mathcal{I}_{\delta, \tilde{\eta}}} g\left(1-\lambda_i\right)}
\end{aligned}
\end{equation}
According to Eq.~\ref{equ:signalError} and Eq.~\ref{equ:filterBound}, we have
\begin{equation}
\begin{aligned}
    & \overline{Er}(\mathbf{X}, \mathbf{Y})= \frac{1}{n}\|\sigma(g(I-\tilde{\mathbf{A}}) \mathbf{X})-\mathbf{Y}\|_{F}^{2}= \frac{2}{n} Er\left(\mathbf{x}_{0}, \mathbf{y}_{0}\right) \\
    \leq & c_1 - \frac{\min_{i \in \mathcal{I}_{g, \delta, \eta}}\psi_{\frac{1}{g\left(1-\lambda_i\right) \delta_i}}\left(\eta_i\right) \cdot \delta_i \sum_{i \in \mathcal{I}_{\delta, \tilde{\eta}}} \log |\hat{H}_i|}{2n\log \sum_{i \in \mathcal{I}_{\delta, \tilde{\eta}}} g\left(1-\lambda_i\right) |\hat{H}_i| - 2n\log \sum_{i \in \mathcal{I}_{\delta, \tilde{\eta}}} g\left(1-\lambda_i\right)}
\end{aligned}   
\end{equation}
\end{proof}

\section{Algorithm}

\begin{algorithm}[H]
\caption{$\name$ Framework}
\label{algo:gpatcher-framework}
\begin{algorithmic}[1]
\Require source graph $\mathcal{G}(\mathcal{V}, \mathcal{E}, \mathbf{X})$; adjacency matrix $\mathbf{A}$; patch size $p$; layer number $m$
\Ensure $\mathcal{Y}$

\LeftComment{\textbf{Preprocessing:}}

\State $\tilde{ \mathbf{A}} \gets \text{Normalization}(\mathbf{A})$
\State $\Lambda, \mathbf{U} \gets \text{EigenDecomposition}(\tilde{\textbf{A}})$

\LeftComment{\textbf{Patch Extractor $\phi$:}}

\State $\mathbf{R} \gets \mathbf{U}g(\Lambda)\mathbf{U}^T$ \Comment{Compute patch extractor score matrix}
\State $\mathbf{P} \gets \mathbf{X}[\text{top-}p_{col}(\mathbf{R}, p)]$ \Comment{Extract top-$p$ nodes columnwise from patch extractor score matrix}
\State $\mathbf{P}^0 \gets \mathbf{P}$\Comment{$\mathbf{P} \in \mathbb{R}^{n \times p \times d}$}

\LeftComment{\textbf{Patch Mixer:}}
\For{$i \gets 1$ to $m$}
\State $\mathbf{P}^{i} \gets \sigma(\text{MLP}(\text{LayerNorm}(\mathbf{P}^{i-1}), \boldsymbol{\theta}^i_{\text{feature}}))$ \Comment{Mixing along feature dimension}
\State $\mathbf{P}^{i} \gets \sigma(\text{MLP}(\text{LayerNorm}(\mathbf{P}^{i}), \boldsymbol{\theta}^i_{\text{patch}}))$ \Comment{Mixing along patch dimension}
\EndFor
\State $\mathcal{Y} \gets \text{MLP}(\text{Aggregate}(\mathbf{P}^m), \theta_{\text{predict}})$\Comment{Aggregate along patch dimension and predict node label }
\State \Return $\mathcal{Y}$
\end{algorithmic}
\end{algorithm}
In the standard $\name$~framework, we generate patches $\mathbf{P}=\{\mathbf{P}_v: v\in\mathcal{V}\}$ for all nodes using patch extractor $\phi$, mix $\mathbf{P}$ using multiple layers of patch mixer operations to obtain $\mathbf{P}^m$ and predict node labels $\mathcal{Y}$ using aggregated $\mathbf{P}^m$. We first preprocess the data by normalizing $\textbf{A}$ and then extracting $\mathbf{\Lambda}$ and $\textbf{U}$ from $\tilde{\textbf{A}}$. We then apply the patch extractor $\phi$ and selecting on $\mathbf{X}$ to our graph, yielding patches of size $[n\times p\times d]$, where $n$ is the number of nodes, $p$ is the patch size, and $d$ is the dimension of the input node features. Each patch, represented by a $[p\times d]$ matrix, encapsulates the extracted features from a node and its top $p$ neighboring nodes. The resulting patches are subsequently processed by a feature-mixing layer, which utilizes a Multilayer Perceptron (MLP) followed by Layer Normalization and an arbitrary activation function to operate on the $d$-dimension. This results in a tensor of size $[n\times p\times d]$. To further mix information across all nodes in the patch, a patch-mixing layer is applied on the $p$-dimension using another MLP. This mixing process passes $\mathbf{P}$ through $m$ layers to construct $\mathbf{P}^m$. Finally, we aggregate along the patch dimension $p$ of $\mathbf{P}^m$ using an arbitrary aggregation function which gives us the final representation of size $[n\times d]$. By further applying MLP to this final representation, we classify the label of each node which is the size of $[n\times |\mathcal{Y}|]$.
\begin{algorithm}[H]
\caption{Fast-$\name$ Framework}
\label{algo:gpatcher-framework}
\begin{algorithmic}[1]
\Require source graph $\mathcal{G}(\mathcal{V}, \mathcal{E}, \mathbf{X})$; adjacency matrix $\mathbf{A}$; patch size $p$; layer number $m$; dangling scalar $c$
\Ensure $\mathcal{Y}$

\LeftComment{\textbf{Preprocessing:}}
\State $\tilde{ \mathbf{A}} \gets \text{Normalization}(\mathbf{A})$

\LeftComment{\textbf{Patch Extractor $\phi_{\text{fast}}$:}}
\For{$v$ in $|\mathcal{V}|$}
\State $\mathbf{r}_v  \gets \text{Minimize}~\mathcal{J}(\mathbf{r}_v)$~given $\mathbf{A},c$ \Comment{Eq~\ref{eq:ppr}}
\EndFor
\State $\text{Compute} ~\mathbf{R}~\text{by stacking}~\textbf{r}_v$
\State $\mathbf{P} \gets \mathbf{X}[\text{top-}p_{col}(\mathbf{R}, p)]$ \Comment{Extract top-$p$ nodes column-wise from patch extractor score matrix}
\State $\mathbf{P}^0 \gets \mathbf{P}$\Comment{$\mathbf{P} \in \mathbb{R}^{n \times p \times d}$}

\LeftComment{\textbf{Patch Mixer:}}
\For{$i \gets 1$ to $m$}
\State $\mathbf{P}^{i} \gets \sigma(\text{MLP}(\text{LayerNorm}(\mathbf{P}^{i-1}), \boldsymbol{\theta}^{i}_{\text{feature}}))$ \Comment{Mixing along feature dimension}
\State $\mathbf{P}^{i} \gets \sigma(\text{MLP}(\text{LayerNorm}(\mathbf{P}^{i}), \boldsymbol{\theta}^{i}_{\text{patch}}))$ \Comment{Mixing along patch dimension}
\EndFor
\State $\mathcal{Y} \gets \text{MLP}(\text{Aggregate}(\mathbf{P}^m), \theta_{\text{predict}})$\Comment{Aggregate along patch dimension and predict node label }
\State \Return $\mathcal{Y}$
\end{algorithmic}
\end{algorithm}
In the Fast-$\name$ framework, we generate patches $\mathbf{P}=\{\mathbf{P}_v: v\in\mathcal{V}\}$ for all nodes using patch extractor $\phi_{\text{fast}}$, mix $\mathbf{P}$ using multiple layers of patch mixer operations to obtain $\mathbf{P}^m$, and predict node labels $\mathcal{Y}$ using aggregated $\mathbf{P}^m$. The patch extraction process involves minimizing an objective function, as described in Eq~\ref{eq:ppr}, to obtain the patch extractor scores $\mathbf{r}_v$ for each node $v$. These scores are then stacked to form the patch extractor score matrix $\mathbf{R}$. The dangling scalar $c$ controls the balance between local and global information integration during the patch extractor score computation. When $c$ is lower, the focus of the patch extractor score matrix $\mathbf{R}$ is more localized, prioritizing local structural information. On the other hand, when $c$ is higher, the patch extractor score matrix $\mathbf{R}$ emphasizes global structural information and captures more long-range dependencies in the graph. The choice of $c$ allows for a trade-off between local and global information integration, enabling the framework to adapt to different graph characteristics and learning objectives. The rest of the framework remains the same as the standard $\name$ framework.

\section{Statistics of Datasets}\label{sec:data}

\begin{table}[H]
\centering
\begin{tabular}{cc|c|c|c|c|c}
\hline
\multicolumn{2}{c|}{\textbf{Dataset}} & \textbf{$\mathcal{|V|}$} & \textbf{$\mathcal{|E|}$} & \textbf{$|\mathcal{Y}|$} & $|\mathbf{X}|$ & $\frac{\sum{H_i}}{n}$ \\ 
\hline
\multicolumn{1}{c|}{\multirow{3}{*}{\textbf{Homophily}}} & \textbf{Cora} & 2708 & 5278 & 7 & 1433 & 0.19 \\ 
\multicolumn{1}{c|}{} & \textbf{Citeseer} & 3327 & 4676 & 6 & 3703 & 0.26 \\ 
\multicolumn{1}{c|}{} & \textbf{Pubmed} & 19717 & 44327 & 3 & 500 & 0.2 \\ 
\hline
\multicolumn{1}{c|}{\multirow{6}{*}{\textbf{Heterophily}}} & \textbf{Texas} & 183 & 295 & 5 & 1703 & 0.89 \\ 
\multicolumn{1}{c|}{} & \textbf{Squirrel} & 5201 & 198493 & 5 & 2089 & 0.78 \\ 
\multicolumn{1}{c|}{} & \textbf{Chameleon} & 2277 & 31421 & 5 & 2325 & 0.77 \\ 
\multicolumn{1}{c|}{} & \textbf{Cornell} & 183 & 295 & 5 & 1703 & 0.89 \\ 
\multicolumn{1}{c|}{} & \textbf{Wisconsin} & 251 & 499 & 5 & 1703 & 0.84 \\ 
\multicolumn{1}{c|}{} & \textbf{Actor} & 7600 & 33544 & 5 & 931 & 0.76 \\ 
\hline
\end{tabular}
\caption{Dataset Statistics of homophily and heterophily graph datasets. The table is divided into two sections. The top section represents homophily graph datasets, where the graphs exhibit similar attributes and connectivity patterns within nodes of the same class. The bottom section represents heterophily graph datasets, where the graphs exhibit diverse attributes and connectivity patterns across different classes of nodes. The columns show the number of nodes ($\mathcal{|V|}$), the number of edges ($\mathcal{|E|}$), the number of unique classes ($|\mathcal{Y}|$), the feature dimension ($|\mathbf{X}|$), and the average heterophily score ($\frac{\sum{H_i}}{n}$) indicating the level of heterophily within each dataset.}
\label{tab:dataset_char}
\end{table}

\section{Details of Parameter Settings and Tuning}
 We perform a grid search to find the optimal hyperparameters for our model, including the learning rate, patch size, and filter number. The search space is defined as follows: the learning rate $\in \{0.01, 0.008, 0.005, 0.003, 0.001\}$, the patch size $\in \{8, 16,32,64,96\}$, and the filter number $\in \{10,50,100,150,200\}$. For Fast-$\name$, we set $c$ as 0.5 to equally weigh local and global structural information. The optimal hyperparameters are determined based on the performance of the validation set.

\end{document}